\documentclass[10pt]{article}
\usepackage{amsmath,amsthm,amsfonts,amssymb,latexsym,mathtools,graphicx,stmaryrd}
\usepackage{empheq,fancybox}
\usepackage[noadjust]{cite}
\usepackage{algorithm}
\usepackage{algpseudocode}
\usepackage[pdfpagemode=UseNone,pdfstartview=FitH]{hyperref}

\newcommand{\OCMXXIV}{Vovk:2021}

\emergencystretch=5mm
\tolerance=400
\allowdisplaybreaks[4]

\setcounter{topnumber}{2}
\setcounter{bottomnumber}{2}
\setcounter{totalnumber}{4}

\newtheorem{theorem}{Theorem}
\newtheorem{lemma}[theorem]{Lemma}
\newtheorem{corollary}[theorem]{Corollary}
\newtheorem{proposition}[theorem]{Proposition}
\theoremstyle{definition}

\newtheorem{remark}{Remark}

\DeclareMathOperator{\Prob}{\mathbb{P}}
\DeclareMathOperator{\Expect}{\mathbb{E}}

\newcommand*{\dd}{\,\mathrm{d}}

\newcommand{\R}{\mathbb{R}}

\newcommand{\FFF}{\mathcal{F}}
\newcommand{\GGG}{\mathcal{G}}

\title{Testing for concept shift online}
\author{Vladimir Vovk}

\begin{document}
\maketitle

\begin{abstract}
  This note continues study of exchangeability martingales,
  i.e., processes that are martingales under any exchangeable distribution for the observations.
  Such processes can be used for detecting violations of the IID assumption,
  which is commonly made in machine learning.
  Violations of the IID assumption are sometimes referred to as dataset shift,
  and dataset shift is sometimes subdivided into concept shift, covariate shift, etc.
  Our primary interest is in concept shift,
  but we will also discuss exchangeability martingales that decompose perfectly
  into two components one of which detects concept shift and the other detects what we call label shift.
  Our methods will be based on techniques of conformal prediction.

   The version of this paper at \url{http://alrw.net} (Working Paper 31)
   is updated most often.
\end{abstract}

\section{Introduction}

The most standard way of testing statistical hypotheses is batch testing:
we try to reject a given null hypothesis based on a batch of data.
The alternative approach of online testing (see, e.g., \cite{Shafer/Vovk:2019} or \cite{Shafer:arXiv1903})
consists in constructing a nonnegative process that is a martingale under the null hypothesis.
The ratio of the current value of such a process to its initial value can be interpreted
as the amount of evidence found against the null hypothesis.

The standard assumption in machine learning is the (general) IID assumption,
sometimes referred to (especially in older literature) as the assumption of randomness:
the observations are assumed to be independent and identically distributed,
but nothing is assumed about the probability measure generating a single observation.
Interestingly, there exist processes, \emph{exchangeability martingales},
that are martingales under the IID assumption;
they can be constructed (see, e.g., \cite[Section 7.1]{Vovk/etal:2005book} or \cite{\OCMXXIV})
using the method of conformal prediction \cite[Chapter 2]{Vovk/etal:2005book}.

Deviations from the IID assumption have become a popular topic of research in machine learning
under the name of dataset shift \cite{Candela/etal:2009,Moreno-Torres/etal:2012};
in my terminology I will follow mostly \cite{Moreno-Torres/etal:2012}.
Analysing general dataset shift is usually regarded as too challenging a problem,
and researchers concentrate on restricted versions,
with restrictions imposed on marginal or conditional probabilities
associated with the probability measure generating a single observation.
Different restrictions are appropriate for different kinds of learning problems.

In this note we consider problems of classification,
in which random observations $(X,Y)$ consist of objects $X$ and labels $Y$,
the latter taking a finite number of possible values.
We will be interested in \emph{$Y\to X$ domains}, in the terminology of \cite{Fawcett/Flach:2005},
in which the objects are causally dependent on the labels.
Under the IID assumption, the consecutive pairs $(X,Y)$ have the same probability distribution $P$.
There is a \emph{dataset shift} if $P$ in fact changes between observations.
Let us say that there is a \emph{label shift} if the marginal distribution $P_Y$ of $Y$ under $P$ changes.
Finally, there is a \emph{concept shift} if the conditional distribution $P_{X\mid Y}$ of $X$ given $Y$ changes.
Later in this note we will adopt a wider understanding of a label shift.

As an example, suppose we are interested in the differential diagnosis between cold, flu, and Covid-19
given a set of symptoms.
Under a pure label shift, the properties of the three diseases do not change (there is no concept shift),
and only their prevalence changes, perhaps due to epidemics and pandemics.
Under a concept shift, one or more of the diseases change leading to different symptoms.
Examples are new variants of Covid-19 and new strains of flu that appear every year.

In general, exchangeability martingales may detect both label shift and concept shift.
In some cases we might not be interested in label shift and only be interested in concept shift
(or, perhaps less commonly, vice versa).
The goal of this note is to develop and start investigating exchangeability martingales
targeting only concept shift.
It would be ideal to decompose the amount of evidence found by an exchangeability martingale for dataset shift
into two components, one reflecting the amount of evidence found for concept shift
and the other reflecting the amount of evidence found for label shift.
Such decomposable martingales are our secondary object of study.

New exchangeability martingales and their simple theoretical properties will be the topic of Section~\ref{sec:theory},
and in Section~\ref{sec:experiments} they will be applied to the well-known USPS dataset.
The preliminary results reported in the latter section suggest that the exchangeability martingales
constructed for this dataset in \cite[Section 7.1]{Vovk/etal:2005book} are dominated (and greatly improved)
by an exchangeability martingale decomposable
into a product of an exchangeability martingale for detecting concept shift
and an exchangeability martingale for detecting label shift.

The most obvious application of exchangeability martingales
is to help in deciding when to retrain predictors,
as discussed in \cite{\OCMXXIV}.
We should be particularly worried about the changes that invalidate ROC analysis,
which is the case of concept shift in a $Y\to X$ domain
\cite{Webb/Ting:2005,Fawcett/Flach:2005}.
Our exchangeability martingales for concept shift are designed to detect
such dangerous changes.

In the context of conformal prediction,
concept shift in $Y\to X$ domains requires retraining label-conditional predictors
\cite[Section 4.5]{Vovk/etal:2005book}.
For connection between label-conditional predictors and ROC analysis,
see \cite[Section 2.7]{Bala/etal:2014}.

\section{Theory}
\label{sec:theory}

For a detailed review of conformal prediction see, e.g., \cite{Vovk/etal:2005book},
but in this section I will mainly follow \cite[Chapters 1 and 2]{Bala/etal:2014}
(for the generation of conformal p-values)
and \cite{\OCMXXIV} (for gambling against those p-values).

As mentioned earlier, we consider \emph{observations} $z=(x,y)$ that consist of two components,
the \emph{object} $x$ and the \emph{label} $y$.
Let $\mathbf{X}$ be the measurable space of all possible objects,
and $\mathbf{Y}$ be the set of all possible labels.
Set $\mathbf{Z}:=\mathbf{X}\times\mathbf{Y}$; this is our \emph{observation space}.
We are interested in classification and so always assume $\left|\mathbf{Y}\right|<\infty$;
$\mathbf{Y}$ is always equipped with the discrete $\sigma$-algebra.

A \emph{conformity measure} $A$ is a function that maps
any finite sequence $(z_1,\dots,z_n)\in\mathbf{Z}^n$ of observations of any length $n\in\{1,2,\dots\}$
to a sequence $(\alpha_1,\dots,\alpha_n)\in\R^n$ of real numbers of the same length
that is \emph{equivariant} in the following sense:
for any $n\in\{1,2,\dots\}$, any permutation $\pi:\{1,\dots,n\}\to\{1,\dots,n\}$,
and any sequences $(z_1,\dots,z_n)\in\mathbf{Z}^n$ and $(\alpha,\dots,\alpha_n)\in\R^n$,
\begin{equation}\label{eq:equivariance}
  \left(
    \alpha_1,\dots,\alpha_n
  \right)
  =
  A
  \left(
    z_1,\dots,z_n
  \right)
  \Longrightarrow
  \left(
    \alpha_{\pi(1)},\dots,\alpha_{\pi(n)}
  \right)
  =
  A
  \left(
    z_{\pi(1)},\dots,z_{\pi(n)}
  \right).
\end{equation}

In our experiments in Section~\ref{sec:experiments} we will only use conformity measures,
but in theory we are also interested in the following generalization.
A \emph{label-conditional conformity measure} $A$ is a function that maps
any finite sequence $(z_1,\dots,z_n)\in\mathbf{Z}^n$ of observations of any length $n\in\{1,2,\dots\}$
to a sequence $(\alpha_1,\dots,\alpha_n)\in\R^n$ of real numbers of the same length
that is \emph{label-conditionally equivariant}:
for any $n\in\{1,2,\dots\}$, any permutation $\pi:\{1,\dots,n\}\to\{1,\dots,n\}$,
and any sequences $(z_1,\dots,z_n)=((x_1,y_1),\dots,(x_n,y_n))\in\mathbf{Z}^n$ and $(\alpha,\dots,\alpha_n)\in\R^n$,
\[
  \begin{rcases}
    \quad
    y_1 = y_{\pi(1)}, \dots,
    y_n = y_{\pi(n)}\\
    \left(
      \alpha_1,\dots,\alpha_n
    \right)
    =
    A
    \left(
      z_1,\dots,z_n
    \right)
  \end{rcases}
  \Longrightarrow
  \left(
    \alpha_{\pi(1)},\dots,\alpha_{\pi(n)}
  \right)
  =
  A
  \left(
    z_{\pi(1)},\dots,z_{\pi(n)}
  \right).
\]
In other words, we only require \eqref{eq:equivariance} to hold for the permutations that leave the labels intact.

The \emph{label-conditional conformal transducer} associated with a label-conditional conformity measure $A$
is the function $p$ defined by
\begin{equation}\label{eq:p}
  p(z_1,\dots,z_n,\tau)
  :=
  \frac
  {
    \left|\left\{
      i : y_i=y_n \land \alpha_i<\alpha_n
    \right\}\right|
    +
    \tau
    \left|\left\{
      i : y_i=y_n \land \alpha_i=\alpha_n
    \right\}\right|
  }
  {
    \left|\left\{
      i : y_i=y_n
    \right\}\right|
  },
\end{equation}
where $i$ ranges over $1,\dots,n$,
$z_i=(x_i,y_i)$ for all $i\in\{1,\dots,n\}$,
\begin{equation}\label{eq:alpha}
  \left(
    \alpha_1,\dots,\alpha_n
  \right)
  :=
  A
  \left(
    z_1,\dots,z_n
  \right),
\end{equation}
and $\tau\in[0,1]$.
The values~\eqref{eq:p} will be referred to as \emph{p-values}.
If the label-conditional conformity measure $A$ is in fact a conformity measure,
we will say that the label-conditional conformal transducer $p$ associated with it is \emph{simple}.

Let $Z_1,Z_2,\dots$ be a sequence of random observations,
i.e., random elements whose domain is a fixed probability space with probability measure $\Prob$
and which take values in the observation space $\mathbf{Z}$.
Each random observation $Z_n$ is a pair $Z_n=(X_n,Y_n)$,
where $X_n$ is a random object and $Y_n$ is a random label.

Let us say that the random sequence of observations $Z_1,Z_2,\dots$ is \emph{label-conditional exchangeable}
if, for any $n\in\{1,2,\dots\}$, any sequence $(y_1,\dots,y_n)\in\mathbf{Y}^n$,
any sequence of measurable sets $E_1,\dots,E_n$ in $\mathbf{X}$,
and any permutation $\pi:\{1,\dots,n\}\to\{1,\dots,n\}$,
\begin{multline*}
  y_1 = y_{\pi(1)}, \dots,
  y_n = y_{\pi(n)}\\
  \Longrightarrow
  \Prob
  \left(
    Y_1=y_1,\dots,Y_n=y_n,
    X_1\in E_1,\dots,X_n\in E_n
  \right)\\
  =
  \Prob
  \left(
    Y_1=y_1,\dots,Y_n=y_n,
    X_{\pi(1)}\in E_1,\dots,X_{\pi(n)}\in E_n
  \right).
\end{multline*}
This is an instance of de Finetti's \cite{deFinetti:1938} notion of partial exchangeability.
The sequence $Z_1,Z_2,\dots$ is \emph{exchangeable} if, for any $n\in\{1,2,\dots\}$,
any sequence of measurable sets $E_1,\dots,E_n$ in $\mathbf{Z}$,
and any permutation $\pi:\{1,\dots,n\}\to\{1,\dots,n\}$,
\[
  \Prob
  \left(
    Z_1\in E_1,\dots,Z_n\in E_n
  \right)
  =
  \Prob
  \left(
    Z_{\pi(1)}\in E_1,\dots,Z_{\pi(n)}\in E_n
  \right).
\]
Of course, exchangeability is a stronger property than label-conditional exchangeability.

\begin{proposition}\label{prop:label-conditional}
  If the sequence of random observations $Z_1,Z_2,\dots$ is label-conditional exchangeable,
  $(\tau_1,\tau_2,\dots)$ is an independent sequence of independent random variables each distributed uniformly in $[0,1]$,
  and $p$ is a label-conditional conformal transducer,
  the sequence of random p-values
  \begin{equation}\label{eq:P}
    P_n
    :=
    p(Z_1,\dots,Z_n,\tau_n),
    \quad
    n=1,2,\dots,
  \end{equation}
  is distributed uniformly in $[0,1]^{\infty}$.
\end{proposition}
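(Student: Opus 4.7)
The plan is to prove joint uniformity of $(P_1,P_2,\ldots)$ via a reverse-filtration argument, a standard technique in conformal prediction. For each $n \ge 1$, define
\[
  \mathcal{E}_n
  :=
  \sigma\Bigl(
    Y_1, Y_2, \ldots;\
    \bigl\{\!\!\bigl\{Z_i : 1 \le i \le n,\, Y_i = y\bigr\}\!\!\bigr\}_{y\in\mathbf{Y}};\
    Z_{n+1}, Z_{n+2}, \ldots;\
    \tau_{n+1}, \tau_{n+2}, \ldots
  \Bigr),
\]
the $\sigma$-algebra generated by all the labels, the label-conditional multiset of the first $n$ observations for each label class, and the tails of the $Z$'s and $\tau$'s from index $n+1$ onwards. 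This is a decreasing filtration, since passing from $\mathcal{E}_n$ to $\mathcal{E}_{n-1}$ additionally reveals $Z_n$ and $\tau_n$, and the per-label bag at time $n-1$ is obtained from that at time $n$ by removing $Z_n$ from the bag for label $Y_n$.

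Next I would establish two key properties. \emph{First}, label-conditional exchangeability combined with the independence of the $\tau_i$ from $Z_1,Z_2,\ldots$ implies that the conditional distribution of $(Z_1,\ldots,Z_n)$ given $\mathcal{E}_n$ is invariant under every label-preserving permutation of $\{1,\ldots,n\}$; equivalently, it is uniform over all arrangements of the first $n$ observations consistent with the known per-label bags. \emph{Second}, label-conditional equivariance of $A$ applied to $(Z_1,\ldots,Z_k)$ using permutations that are arbitrary label-preserving bijections on $\{1,\ldots,n\}$ and the identity on $\{n+1,\ldots,k\}$ shows that $A(Z_1,\ldots,Z_k)$ depends on $(Z_1,\ldots,Z_n)$ only through the per-label bags; hence $P_k$ is $\mathcal{E}_n$-measurable for every $k > n$.

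From these properties the conditional distribution of $P_n$ is easy. Equivariance makes the multiset $\mathbb{A}_n := \{\!\!\{\alpha_i : i \le n,\,Y_i = Y_n\}\!\!\}$ be $\mathcal{E}_n$-measurable, and the first property gives $\alpha_n\mid\mathcal{E}_n$ uniform on $\mathbb{A}_n$. Because $\tau_n$ is independent of $\mathcal{E}_n$ and of $\alpha_n$, substituting into~\eqref{eq:p} expresses $P_n$ as the standard randomized-rank statistic, which a short case analysis splitting by the value of $\alpha_n$ shows to be uniform on $[0,1]$ conditionally on $\mathcal{E}_n$. Combined with $P_{n+1},P_{n+2},\ldots\in\mathcal{E}_n$, this gives $P_n\sim U[0,1]$ unconditionally and independent of the future, and a straightforward induction then yields that $(P_1,P_2,\ldots)$ is i.i.d.\ $U[0,1]$, i.e.\ uniformly distributed on $[0,1]^\infty$.

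I expect the most delicate step to be the $\mathcal{E}_n$-measurability of the future p-values $P_k$, $k > n$. This is the structural property underpinning the reverse-filtration argument, and it relies on invoking label-conditional equivariance at the larger length $k$ to extract that information from indices $1,\ldots,n$ can only enter $P_k$ via the per-label bags. Once this is secured, the conditional uniformity of the ordering at time $n$, the randomized-rank computation, and the inductive assembly into joint uniformity are all routine.
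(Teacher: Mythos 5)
Your proposal is correct and takes essentially the same route as the paper's proof: the paper defers Proposition~\ref{prop:label-conditional} to \cite[Section~8.7]{Vovk/etal:2005book}, and that argument is precisely the backward (bag) filtration scheme you describe --- a decreasing $\sigma$-algebra generated by per-label multisets and the future data, $\mathcal{E}_n$-measurability of the later p-values via label-conditional equivariance, conditional uniformity of $P_n$ from label-preserving permutations plus the randomized tie-breaking, and a backward induction to get joint uniformity. The same template also underlies the paper's Appendix~\ref{app:proof} proof of Theorem~\ref{thm:interleaved}, so no substantive difference to report.
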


For a proof of Proposition~\ref{prop:label-conditional},
see \cite[Section 8.7]{Vovk/etal:2005book}
(Proposition~\ref{prop:label-conditional} is a special case of Theorem~8.1 in \cite{Vovk/etal:2005book}).

If $Z$ is a measurable space, $Z^*$ stands for the set of all finite sequences of elements of $Z$
(equipped with the natural $\sigma$-algebra).
It includes the empty sequence $\Box$.
A \emph{betting martingale} is a measurable function $F:[0,1]^*\to[0,\infty]$
such that $F(\Box)=1$ and,
for each sequence $(u_1,\dots,u_{n-1})\in[0,1]^{n-1}$ for any $n\in\{1,2,\dots\}$,
\begin{equation}\label{eq:betting-martingale}
  \int_0^1
  F(u_1,\dots,u_{n-1},u)
  \dd u
  =
  F(u_1,\dots,u_{n-1}).
\end{equation}
(The three unusual features of this definition are that betting martingales are required to be nonnegative,
start from 1,
and are allowed to take value $\infty$.)
The \emph{test martingale} associated with the betting martingale $F$ and a sequence $(P_1,P_2,\dots)$
uniformly distributed in $[0,1]^{\infty}$
(the \emph{input p-values})
is the sequence of random variables
\begin{equation}\label{eq:S_n}
  S_n
  =
  F(P_1,\dots,P_n),
  \quad
  n=0,1,\dots.
\end{equation}
The sequence $(S_n)_{n=0,1,\dots}$ is a nonnegative martingale,
in the usual sense of probability theory \cite[Definition~7.1.1]{Shiryaev:2019},
in its own filtration $\FFF_n:=\sigma(S_1,\dots,S_n)$
or the filtration $\FFF_n:=\sigma(P_1,\dots,P_n)$ generated by the input p-values.
Intuitively, this martingale describes the evolution of the capital of a player
who gambles against the hypothesis that the input p-values are distributed uniformly and independently.

In this note we will be interested in three classes of martingales.
The \emph{label-conditional conformal martingales} are defined
as the test martingales associated with any betting martingale $F$
and a sequence $(P_1,P_2,\dots)$ defined by \eqref{eq:P}
(under the conditions of Proposition~\ref{prop:label-conditional})
as the input p-values.

Label-conditional conformal martingales are main topic of this note.
They detect concept shift.
It was shown, once again, in \cite[Section~7.1]{Vovk/etal:2005book}
that the USPS dataset is non-exchangeable,
and in Section~\ref{sec:experiments} we will explore sources of this lack of exchangeability.

\begin{remark}
  It is important that our exchangeability martingales for detecting concept shift
  can be used in situations where the labels are so far from being IID
  that it would be unusual to talk about label shift.
  Discussion of label shift usually presuppose at least approximate independence of labels.
  Suppose a sequence of hand-written characters $x_1,x_2,\dots$ comes from a user writing a letter.
  The objects $x_n$ are matrices of pixels and the corresponding labels $y_n$ take values in the set $\{a,b,\dots\}$.
  Different instances of the same character, say ``a'', may well be exchangeable among themselves
  (even conditionally on knowing the full text of the letter),
  whereas the text itself will be far from IID;
  for example, ``q'' will be almost invariably followed by ``u'' if the letter is in English.
  For discussions of such partial exchangeability,
  see, e.g., \cite{deFinetti:1938}, \cite{Ryabko:2006}, and \cite[Section 8.4]{Vovk/etal:2005book}.
\end{remark}

In the rest of this section we will look for possible explanations of the difference
between the amount of evidence found against concept shift and against exchangeability.
We will see that in some situation the amount of evidence found against exchangeability decomposes into two components:
\begin{itemize}
\item
  the amount of evidence found for concept shift;
\item
  the amount of evidence found for label shift.
\end{itemize}
In these situations the second component can be said to explain the difference.

A \emph{label conformity measure} $A$ is a conformity measure that satisfies, additionally,
the following property:
for any finite sequence $(z_1,\dots,z_n)\in\mathbf{Z}^n$ of observations of any length $n\in\{1,2,\dots\}$,
any sequence $(\alpha_1,\dots,\alpha_n)\in\R^n$ of real numbers of the same length,
and any $i,j\in\{1,\dots,n\}$,
\begin{equation}\label{eq:label-invariance}
  \left.
  \begin{aligned}
    y_i &= y_j\\
    \left(
      \alpha_1,\dots,\alpha_n
    \right)
    &=
    A
    \left(
      z_1,\dots,z_n
    \right)
  \end{aligned}
  \right\}
  \Longrightarrow
  \alpha_i = \alpha_j,
\end{equation}
where $y_i$ and $y_j$ are the labels in $z_i$ and $z_j$, respectively.
In other words, it assigns conformity scores only to the labels rather than to the full observations.
(Notice that the requirement of equivariance only ensures \eqref{eq:label-invariance}
with ``$z_i=z_j$'' in place of ``$y_i=y_j$''.)
The \emph{conformal transducer} associated with a conformity measure $A$ outputs the p-values
\begin{equation}\label{eq:p-prime}
  p'(z_1,\dots,z_n,\tau)
  :=
  \frac
  {
    \left|\left\{
      i : \alpha_i<\alpha_n
    \right\}\right|
    +
    \tau
    \left|\left\{
      \alpha_i=\alpha_n
    \right\}\right|
  }
  {n},
\end{equation}
where $i\in\{1,\dots,n\}$, $\alpha_1,\dots,\alpha_n$ are defined by \eqref{eq:alpha},
and $\tau\in[0,1]$.
We will say that $p'$ is a \emph{label conformal transducer} if $A$ is a label conformity measure.

Our method of decomposing exchangeability martingales will be based on the following result
(version of Theorem~8.1 in \cite{Vovk/etal:2005book}).
Its proof is given in Appendix~\ref{app:proof}.

\begin{theorem}\label{thm:interleaved}
  If the sequence of random observations $Z_1,Z_2,\dots$ is exchangeable,
  $(\tau_1,\tau_2,\dots)$ and $(\tau'_1,\tau'_2,\dots)$ are independent
  (between themselves and of the observations)
  sequences distributed uniformly in $[0,1]^{\infty}$,
  $p$ is a simple label-conditional conformal transducer,
  and $p'$ is a label conformal transducer,
  the interleaved sequence of random p-values $P_1,P'_1,P_2,P'_2,\dots$, where
  \[
    P_n
    :=
    p(Z_1,\dots,Z_n,\tau_n),
    \quad
    P'_n
    :=
    p'(Z_1,\dots,Z_n,\tau'_n),
  \]
  is distributed uniformly in $[0,1]^{\infty}$.
\end{theorem}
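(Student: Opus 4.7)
The plan is to condition on all labels together with all $\tau'_n$. The key structural observation is that property \eqref{eq:label-invariance} forces the scores produced by a label conformity measure to depend only on labels, so $p'(z_1,\dots,z_n,\tau)$ is a function of $(y_1,\dots,y_n,\tau)$ alone. Consequently each $P'_n$ is measurable with respect to $\GGG := \sigma(Y_1,Y_2,\dots,\tau'_1,\tau'_2,\dots)$, and this conditioning cleanly separates the two families: the $P'_n$ become $\GGG$-measurable while the $P_n$ retain a conformal structure under $\Prob(\cdot\mid\GGG)$.

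I would then establish two facts. First, conditional on $\GGG$, the sequence $(P_n)$ is iid uniform on $[0,1]$: exchangeability of $(Z_n)$ implies label-conditional exchangeability, which is inherited by the regular conditional distribution of $(Z_n)$ given $(Y_1,Y_2,\dots)$, and $(\tau_n)$ remains iid uniform under $\Prob(\cdot\mid\GGG)$ by its independence from $(Z_n)$ and $(\tau'_n)$; so the hypotheses of Proposition~\ref{prop:label-conditional} hold under the conditional measure and yield the required conditional uniformity. Second, the marginal distribution of $(P'_n)$ is iid uniform on $[0,1]$: writing $A(z_1,\dots,z_n)=A'(y_1,\dots,y_n)$ for the label conformity measure $A$ underlying $p'$, the equivariance of $A$ passes to equivariance of $A'$ as a conformity measure on $\mathbf{Y}$, and $P'_n$ coincides with the output of the conformal transducer built from $A'$ applied to the exchangeable label sequence $(Y_n)$; the standard conformal-prediction result (equivalently, Proposition~\ref{prop:label-conditional} specialised to a singleton label alphabet) then applies.

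Combining these two facts, for any Borel sets $A_1,B_1,\dots,A_n,B_n\subseteq[0,1]$, the $\GGG$-measurability of the $P'_i$ together with the conditional uniformity of $(P_i)$ gives
\[
  \Prob\bigl(P_1\in A_1,\,P'_1\in B_1,\dots,P_n\in A_n,\,P'_n\in B_n\bigr)
  = \Bigl(\prod_{i=1}^n |A_i|\Bigr)\,\Prob\bigl(P'_1\in B_1,\dots,P'_n\in B_n\bigr)
  = \prod_{i=1}^n |A_i|\,|B_i|,
\]
where the last step invokes the marginal uniformity of $(P'_n)$. This delivers the required uniform finite-dimensional distributions and hence the claim.

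The main obstacle is the conditional application of Proposition~\ref{prop:label-conditional}: one has to set up the regular conditional probability given $\GGG$ carefully and verify that both the label-conditional exchangeability of $(Z_n)$ and the conditional iid uniformity of $(\tau_n)$ descend to that conditional measure. This is largely standard measure-theoretic bookkeeping, but it is where the real work concentrates.
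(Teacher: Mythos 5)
There is a genuine gap, and it sits at the very first step. Your ``key structural observation'' misreads the definition of a label conformity measure: condition \eqref{eq:label-invariance} only says that, \emph{within a given sequence} $(z_1,\dots,z_n)$, two observations with the same label receive the same score; it does not say that this common score is a function of the labels alone. The common score of a label class is in general a function of the whole sequence, objects included --- the paper's own example \eqref{eq:p-invariant} (the class-wise average of nearest-neighbour ratio scores) depends heavily on the objects $x_j$. Consequently $P'_n$ is \emph{not} measurable with respect to $\GGG=\sigma(Y_1,Y_2,\dots,\tau'_1,\tau'_2,\dots)$, and the factorization
\[
  \Prob\bigl(P_1\in A_1,P'_1\in B_1,\dots\bigr)
  =\Bigl(\prod_i\lvert A_i\rvert\Bigr)\Prob\bigl(P'_1\in B_1,\dots\bigr)
\]
collapses: you cannot pull the indicators of $\{P'_i\in B_i\}$ out of the conditional expectation given $\GGG$. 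Your two supporting facts are individually fine --- conditional on the label sequence the $P_n$ are i.i.d.\ uniform (a conditional use of Proposition~\ref{prop:label-conditional}), and marginally the $P'_n$ are i.i.d.\ uniform because $p'$ is an ordinary conformal transducer --- but marginal statements about the two families do not give their joint uniformity; the independence between the $P$'s and the $P'$'s is exactly what has to be proved. A further warning sign is that your argument never uses the hypothesis that $p$ is \emph{simple}, whereas the theorem (and the paper's proof, in Lemma~\ref{lem:measurable}) needs it.

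The paper handles the entanglement between the two score families with an interleaved \emph{backward} filtration rather than a single global conditioning: $\GGG_n$ is generated by the multiset $\lbag Z_1,\dots,Z_n\rbag$ together with all future observations and random numbers, and $\GGG'_n=\sigma(\GGG_n,Y_n,\tau'_n)$. Revealing $Y_n$ (plus $\tau'_n$) settles $P'_n$ by label invariance (Lemma~\ref{lem:prime-measurable}); revealing $X_n$ (plus $\tau_n$) settles $P_n$, and simplicity of $p$ is what makes $P_n$ determined by the multiset $\lbag Z_1,\dots,Z_{n-1}\rbag$ and $Z_n$ (Lemma~\ref{lem:measurable}); at each stage the newly settled p-value is conditionally uniform given the finer $\sigma$-algebra (Lemmas~\ref{lem:GGG-prime} and~\ref{lem:GGG}), and a double backward induction multiplies these conditional uniformities into the product formula \eqref{eq:GGG-full}. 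To repair your proof you would need this kind of step-by-step conditioning (or an equivalent device) rather than conditioning once on all labels and all $\tau'$'s.
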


A \emph{conformal martingale} is defined to be the test martingale associated
(via \eqref{eq:S_n}, where $F$ is a betting martingale)
with a conformal transducer.
If the underlying conformity measure is a label conformity measure,
the conformal martingale will be called a \emph{label conformal martingale}.

We will say that a label-conditional conformal martingale is \emph{simple}
if its underlying label-conditional conformal transducer is simple.

Having the stream of random p-values $P_1,P'_1,P_2,P'_2,\dots$ produced as in Theorem~\ref{thm:interleaved},
we can define two derivative exchangeability martingales:
a label-conditional conformal martingale associated with $P_1,P_2,\dots$
and a label conformal martingale associated with $P'_1,P'_2,\dots$.
(There are no restrictions on the underlying betting martingales.)

\begin{corollary}\label{cor:product}
  The product of a simple label-conditional conformal martingale and a label conformal martingale
  with independent randomizations (i.e., their sequences of random numbers $\tau$)
  is an exchangeability martingale.
\end{corollary}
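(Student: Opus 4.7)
My plan is to reduce the corollary to Theorem~\ref{thm:interleaved} by packaging the two betting martingales into a single betting martingale that acts on the interleaved stream of p-values. Write $S_n := F(P_1,\dots,P_n)$ and $S'_n := F'(P'_1,\dots,P'_n)$, where $F,F'$ are the betting martingales underlying the two conformal martingales and $(P_n),(P'_n)$ are the p-values output by the two transducers with their independent randomizations $(\tau_n),(\tau'_n)$.

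First I would construct a betting martingale $G\colon [0,1]^*\to[0,\infty]$ by setting
\[
  G(u_1,v_1,\dots,u_n,v_n) := F(u_1,\dots,u_n)\,F'(v_1,\dots,v_n)
\]
on sequences of even length $2n$, and
\[
  G(u_1,v_1,\dots,u_n,v_n,u_{n+1}) := F(u_1,\dots,u_{n+1})\,F'(v_1,\dots,v_n)
\]
on sequences of odd length $2n+1$. A short check shows $G(\Box)=1$ and the integral identity \eqref{eq:betting-martingale} holds at every position: integrating out the next odd-indexed coordinate invokes only the martingale property of $F$ (with $F'$ coming out as a constant factor), while integrating the next even-indexed coordinate invokes only that of $F'$. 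Hence $G$ is a betting martingale.

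Next I would invoke Theorem~\ref{thm:interleaved}: under any exchangeable distribution for the observations, the interleaved sequence $P_1,P'_1,P_2,P'_2,\dots$ is uniform on $[0,1]^{\infty}$. Feeding this stream into $G$ then yields, by the definition of a test martingale, a nonnegative martingale $(T_m)_{m\ge 0}$ in the filtration $\GGG_m$ generated by the first $m$ interleaved p-values. The subsequence at even indices,
\[
  T_{2n} = F(P_1,\dots,P_n)\,F'(P'_1,\dots,P'_n) = S_n\, S'_n,
\]
is therefore a nonnegative martingale in $(\GGG_{2n})_{n\ge 0}$, which coincides with the filtration generated by $Z_1,\dots,Z_n$ together with the randomizations $\tau_1,\tau'_1,\dots,\tau_n,\tau'_n$. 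Since this holds under any exchangeable distribution on the observations, the product $S_n S'_n$ is an exchangeability martingale.

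The only real obstacle is bookkeeping: one must verify the betting-martingale identity for $G$ in both odd-to-even and even-to-odd transitions (which is precisely why the two-case definition is needed), and check that the sub-filtration $(\GGG_{2n})$ in which $T_{2n}$ is a martingale agrees with the natural filtration of the product process. Nothing deeper is required, and the independence of the two randomization streams enters only through the hypothesis of Theorem~\ref{thm:interleaved}; the heavy lifting has already been done there.
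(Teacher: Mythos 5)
Your proof is correct, but it takes a slightly different route from the paper's. The paper verifies directly that $S_nS'_n$ is a martingale in the filtration generated by the interleaved p-values, by a two-step tower-property computation: first conditioning on $P_1,P'_1,\dots,P_{n-1},P'_{n-1},P_n$ to integrate out $P'_n$ via the betting-martingale identity \eqref{eq:betting-martingale} for $F'$, then integrating out $P_n$ via the identity for $F$. You instead package $F$ and $F'$ into a single betting martingale $G$ on the interleaved stream (with the two-case even/odd definition), invoke the generic fact that a test martingale built from a betting martingale and a uniform p-value stream is a nonnegative martingale, and then restrict to even times. The verification that $G$ satisfies \eqref{eq:betting-martingale} is exactly the same alternating use of the two betting-martingale identities that appears in the paper's display, so the mathematical content coincides; what your packaging buys is that the product is exhibited as a test martingale in its own right (associated with an explicit betting martingale on the interleaved p-values), and you get for free that the odd-time values $F(P_1,\dots,P_{n+1})F'(P'_1,\dots,P'_n)$ are also martingale values, whereas the paper's computation is more self-contained and avoids introducing a new object. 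One small inaccuracy: $\GGG_{2n}$, generated by the first $2n$ interleaved p-values, is in general only a sub-$\sigma$-algebra of $\sigma(Z_1,\tau_1,\tau'_1,\dots,Z_n,\tau_n,\tau'_n)$, not equal to it; this is harmless, since the martingale property in the p-value filtration (or in the product's own filtration) is all that the notion of exchangeability martingale requires here, and it matches the convention stated after \eqref{eq:S_n}.
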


Such product exchangeability martingales decompose perfectly
into components for detecting concept shift and label shift.
For a short proof of this corollary, see Appendix~\ref{app:proof-corollary}.

\section{Experiments}
\label{sec:experiments}

The dataset used in our experiment is the well-known USPS dataset of hand-written digits
\cite[Appendix B.1]{Vovk/etal:2005book},
which is known to be non-exchangeable.
The objects $x_n$ are $16\times16$ matrices with entries in $[-1,1]$
(representing pixel intensities),
and the labels $y_n$ are elements of $\{0,\dots,9\}$;
overall there are 9298 labelled images
(obtained by merging the original training set of 7291 and test set of 2007).
This dataset is clearly in the $Y\to X$ domain
(the writer's intention causes the resulting matrix of pixels,
not vice versa).

\begin{figure}
  \begin{center}
    \includegraphics[width=0.49\textwidth]{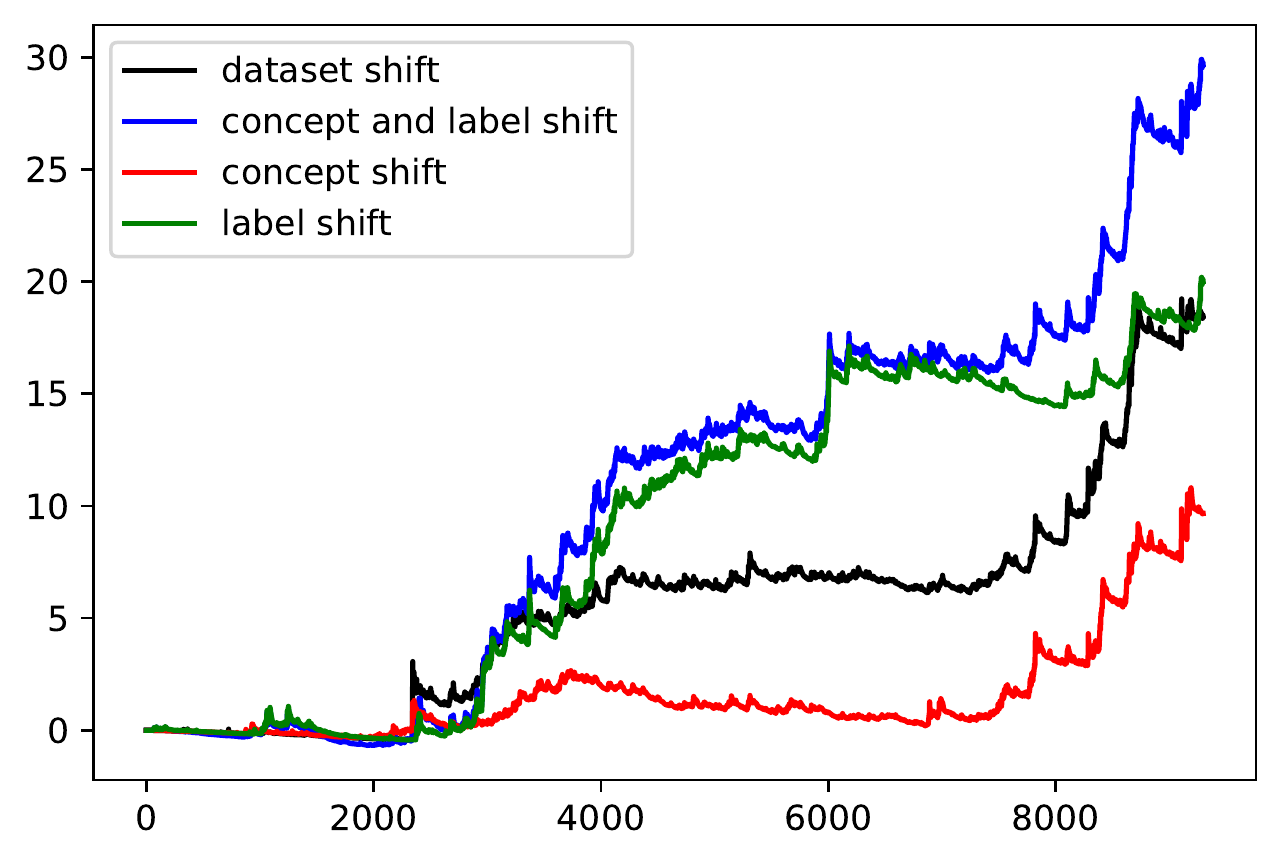}
    \includegraphics[width=0.49\textwidth]{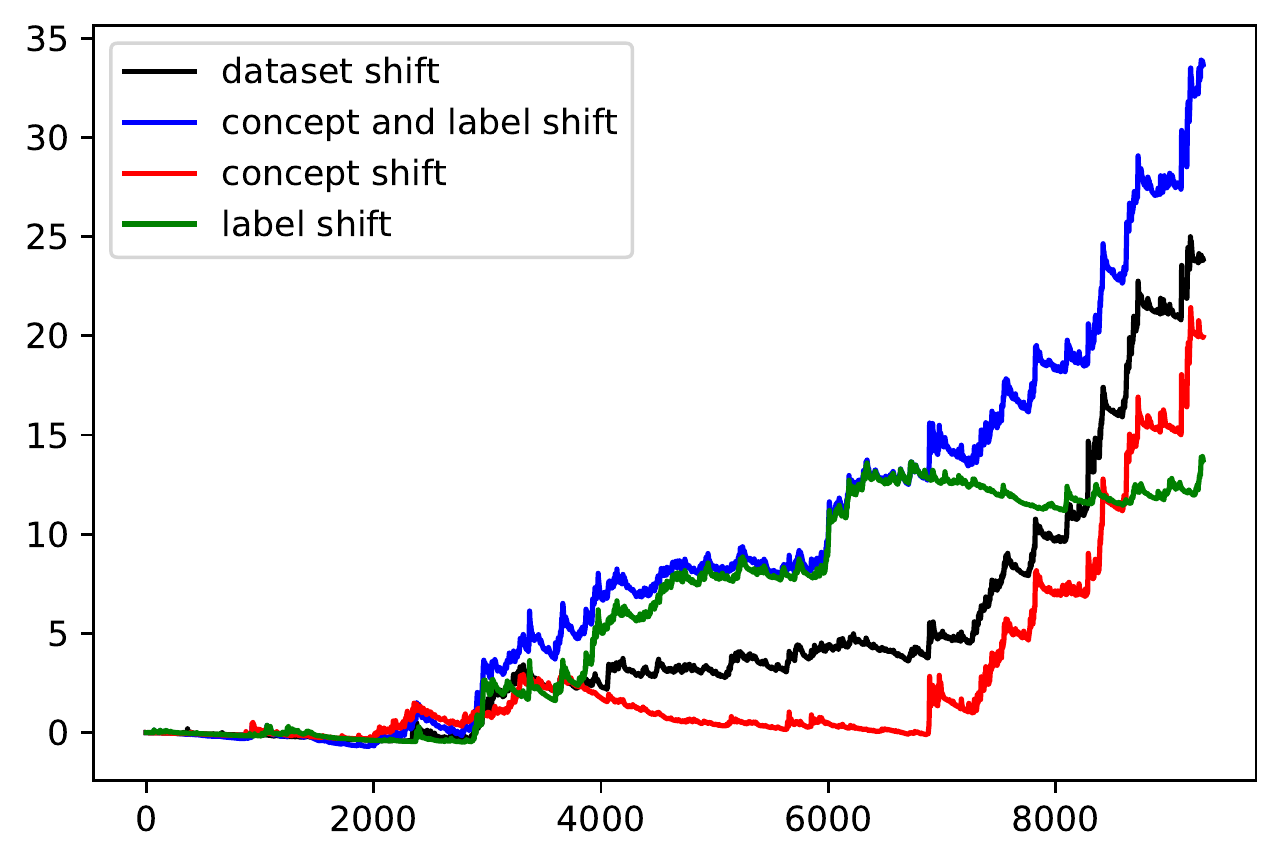}
  \end{center}
  \caption{Four exchangeability martingales for the ratio conformity measure (left panel)
    and its modification described in text (right panel)}
  \label{fig:ratio}
\end{figure}

Online methods for testing the exchangeability of the USPS dataset
are described in \cite[Section 7.1]{Vovk/etal:2005book}.
The best result reported there \cite[Figure 7.6]{Vovk/etal:2005book}
is given as the black line in the left panel of Figure~\ref{fig:ratio}.
It plots $n\in\{0,\dots,9298\}$ vs the value $S_n$ of a conformal martingale with initial value 1
after processing the first $n$ observations.
The values of $S_n$ are given on the log (base 10) scale.
The final value $S_{9298}$ exceeds $10^{18}$.
The martingale is randomized, but its trajectory does not depend much on the seed used in the random number generator
(and this will be true for all other conformal martingales discussed in this note).

The conformity measure used in \cite[Figure 7.6]{Vovk/etal:2005book} is of the nearest-neighbour type:
namely, the conformity score $\alpha_i$ of the $i$th observation $(x_i,y_i)$ in a sequence
$(x_1,y_1),\dots,(x_n,y_n)$ is defined as
\begin{equation}\label{eq:ratio}
  \alpha_i
  :=
  \frac
  {
    \min_{j: y_j \ne y_i}
    \left\|x_i-x_j\right\|
  }
  {
    \min_{j\ne i: y_j = y_i}
    \left\|x_i-x_j\right\|
  },
\end{equation}
where $\left\|\dots\right\|$ is Euclidean norm.
(Using the tangent distance in place of the Euclidean distance $\left\|x-x'\right\|$ leads to similar results,
for all experiments reported in this note,
unlike the batch experiments in \cite[Section 2]{\OCMXXIV}.)

The conformity score \eqref{eq:ratio} is the ratio of the (nearest) distance to another class
to the distance to the same class (excluding the current observation).
This conformity measure will be referred to as the \emph{ratio} conformity measure.
Later we will also be interested in modifications of this conformity measure.

The betting martingale used in all our experiments is the \emph{Sleepy Jumper},
as described in \cite[Section 7.1]{Vovk/etal:2005book}.
I will not repeat the definition here, and only mention that it involves two parameters,
$R=0.01$ and $J=0.001$.
(Inevitably, there is some element of data snooping here,
since these values were chosen because of their reasonable performance on the USPS dataset,
but it is limited by the use of round figures.)
Only these values of parameters will be used in this note.

Each of the four exchangeability martingales in Figure~\ref{fig:ratio}
apart from the product (the blue martingale) is determined by three components:
\begin{itemize}
\item
  the underlying conformity measure,
  which is either \eqref{eq:ratio} or one of its modifications;
\item
  the transducer,
  which is either the label-conditional conformal transducer \eqref{eq:p}
  or the conformal transducer \eqref{eq:p-prime};
  feeding the conformity measure of the previous item into this transducer
  we obtain a sequence of p-values;
\item
  the betting martingale $F$,
  which in this note is always the Sleepy Jumper;
  we feed the p-values resulting from the previous item into $F$,
  as per \eqref{eq:S_n}.
\end{itemize}
The black martingale in the left panel of Figure~\ref{fig:ratio}
uses the conformity measure \eqref{eq:ratio},
the conformal transducer \eqref{eq:p-prime},
and the Sleepy Jumper.

The black martingale may detect any deviations from exchangeability,
but in this note we are particularly interested in concept shift.
In our current context,
concept shift means that, for some reason, the same digit (such as ``0'') starts looking different;
perhaps people start writing digits differently, or the digits are scanned with different equipment.
To detect concept shift,
we use the same conformity measure \eqref{eq:ratio},
but feed it into the label-conditional conformal transducer \eqref{eq:p};
the resulting sequence of p-values is fed into the Sleepy Jumper, as usual.
The resulting test martingale is shown in red in the left panel of Figure~\ref{fig:ratio}.
Its final value, of the order of magnitude $10^{10}$,
is much less impressive than the final value of the black martingale,
and the red martingale starts its climb towards its final value
over the original test set (the last 2007 observations).

There is, of course, another reason why exchangeability may be violated:
we may have label shift.
To detect it, we use the label conformity measure
that assigns the conformity score
\begin{equation}\label{eq:p-invariant}
  \alpha'_i
  :=
  \frac
  {\sum_{j: y_j=y_i} \alpha_i}
  {\left|\left\{j: y_j=y_i\right\}\right|}
\end{equation}
to the $i$th observation $(x_i,y_i)$ in a sequence $(x_1,y_1),\dots,(x_n,y_n)$.
In other words, we average the conformity scores for each class
to ensure the requirement of invariance \eqref{eq:label-invariance}.

The label conformal martingale obtained by applying the Sleepy Jumper
to the p-values produced by the label conformal transducer \eqref{eq:p-prime}
applied to the conformity scores \eqref{eq:p-invariant}
is shown as the green line in the left panel of Figure~\ref{fig:ratio}.
It is interesting that, despite the invariance restriction,
the final value of the green martingale is even greater than the final value of the black martingale.
The dataset shift can be explained by just the label shift.

According to Corollary~\ref{cor:product},
the product of a label-conditional conformal martingale and a label conformal martingale
is still an exchangeability martingale.
The product is shown as the blue line in the left panel of Figure~\ref{fig:ratio}.
By construction, the blue martingale is perfectly decomposable.
Its final value greatly exceeds the previous record for the USPS dataset
(the final value achieved by the black martingale).

\begin{remark}
  Corollary~\ref{cor:product} has an important condition,
  ``with independent randomizations''.
  It is ignored in this version of the note,
  where the seed of the random number generator is always set to 1.
  Corollary~\ref{cor:product} remains applicable to a high degree of approximation
  since the dependence on the seed of the random number generator is weak.
  This somewhat cavalier approach is likely to change when the Python code for this note
  is rewritten to comply with the recent changes in NumPy random number generation \cite{Kern:2018}.
\end{remark}

The blue exchangeability martingale,
on the one hand, almost dominates the black martingale over the USPS dataset
(namely, it dominates after approximately 3000 observations)
and, on the other hand,
decomposes into a product of exchangeability martingales for detecting concept shift
and for detecting label shift.
Therefore, the red and green pair in the left panel of Figure~\ref{fig:ratio} appears to be a significant improvement
over the black martingale.

For other conformity measures we will often obtain results that are qualitatively different.
For example, squaring the denominator of \eqref{eq:ratio} will lead to the right panel of Figure~\ref{fig:ratio}.
The performance of the exchangeability martingale for detection of concept shift greatly improves
over the original test set,
but the price to pay is deterioration in the performance of the exchangeability martingale
for detection of label shift.

\begin{figure}
  \begin{center}
    \includegraphics[width=0.49\textwidth]{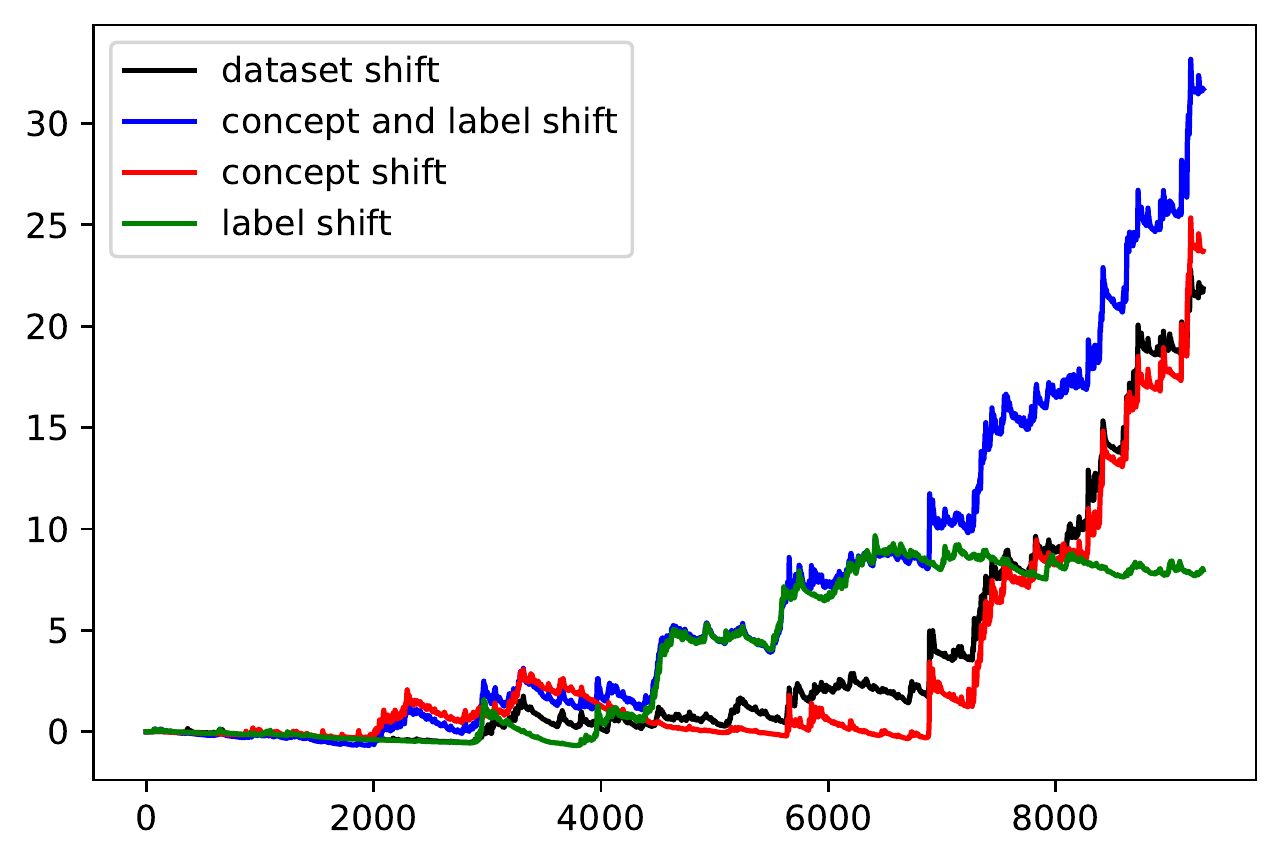}
    \includegraphics[width=0.49\textwidth]{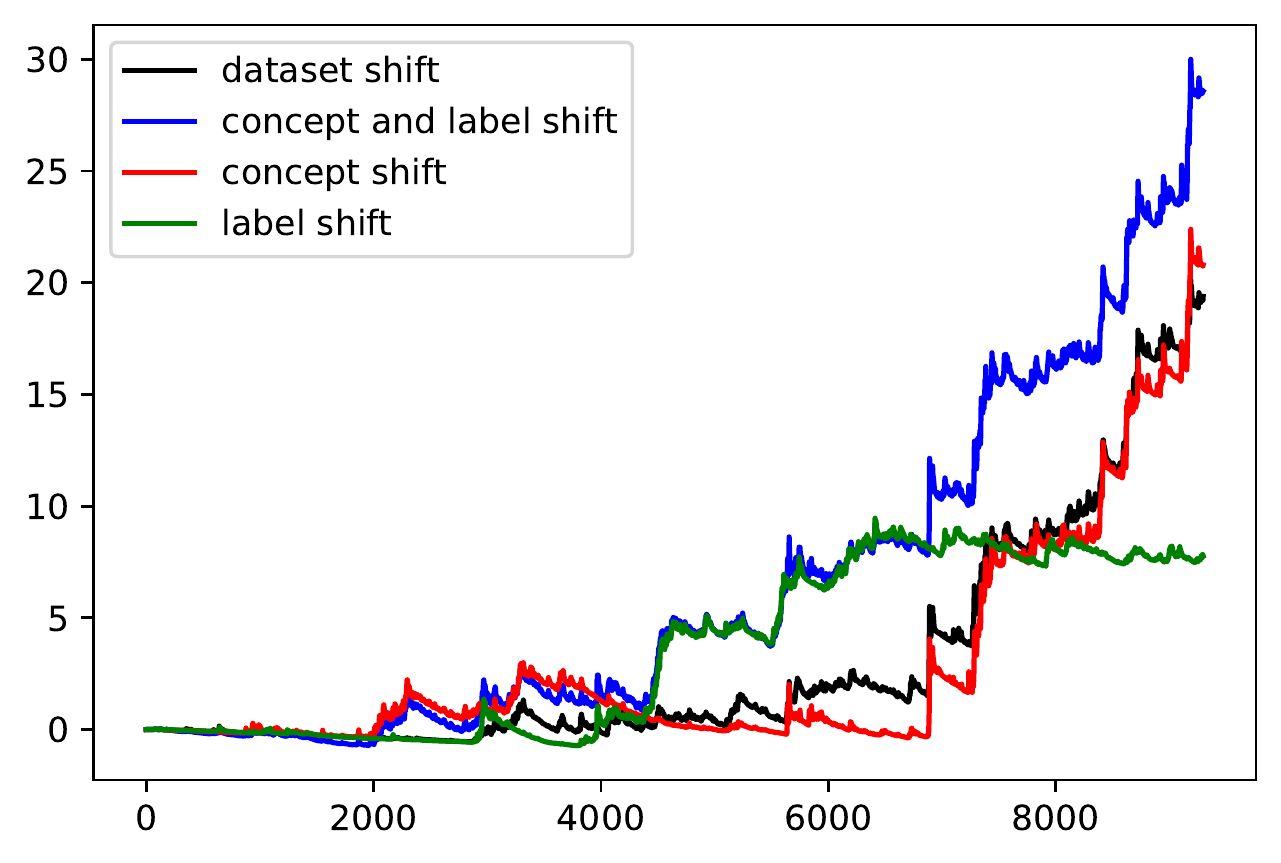}
  \end{center}
  \caption{The exchangeability martingales for the same-class conformity measure (left panel)
    and the nearest-object conformity measure (right panel)}
  \label{fig:same}
\end{figure}

A more radical modification of \eqref{eq:ratio}
is obtained by replacing the numerator of \eqref{eq:ratio} by 1;
let us call it the \emph{same-class conformity measure}.
The resulting exchangeability martingales
(still using the Sleepy Jumper as the betting martingale)
are shown in the left panel of Figure~\ref{fig:same}.
Detection of concept shift becomes even more successful,
and detection of label shift suffers further.

In the definition of the same-class conformity measure,
it is tempting to replace the distance to the nearest neighbour to the same class
(the denominator of \eqref{eq:ratio}) by the distance to the nearest neighbour;
after all, the nearest neighbour can be expected to be of the same class.
This, however, leads to slight deterioration in the final value of the red martingale
(which is our primary interest);
see the right panel of Figure~\ref{fig:same},
where this modification is referred to as the \emph{nearest-object conformity measure}.

Of course, we do not have to use the same conformity measure when combining
the red and green martingales in Figures~\ref{fig:ratio} and~\ref{fig:same}:
Theorem~\ref{thm:interleaved} does not impose any conditions on the conformity measures
giving rise to $p$ and $p'$.
This allows us to obtain much larger final values for a valid exchangeability martingale.
For example, combining the green martingale in the left panel of Figure~\ref{fig:ratio}
and the red martingale in the left panel of Figure~\ref{fig:same},
we obtain an exchangeability martingale that turns 1 into what looks about $10^{45}$
(in fact, $5.16\times10^{43}$ in this experiment).

\section{Conclusion}

We have seen that the existing methods of constructing exchangeability martingales
can be adapted to detecting concept shift.
Perfectly decomposable exchangeability martingales turned out to be surprisingly successful
on the USPS dataset of handwritten digits.

This note concentrated on concept shift in $Y\to X$ classification domains.
It is clear, however, that the same methods are applicable, verbatim,
when the observations $z_i$ take values in any measurable space
and $y_i$ are no longer the labels but defined as $f(z_i)$ for a function $f$ taking finitely many values.
For example, $y_i$ can be an important feature of the object in $z_i$ that we do not wish to model,
but we wish our analysis to be conditional on it
(e.g., $y_i\in\{\text{male},\text{female}\}$ can be a feature).

\subsection*{Acknowledgments}

This work has been supported by Amazon
(project ``Conformal martingales for change-point detection'') and Stena Line.

\appendix
\section{Proof of Theorem~\protect\ref{thm:interleaved}}
\label{app:proof}

It suffices to prove, for a fixed horizon $N\in\{1,2,\dots\}$,
that the random p-values $P'_1,P_1,\dots,P'_N,P_N$ are distributed independently and uniformly in $[0,1]$
(see, e.g., \cite[Section 8.2]{Vovk/etal:2005book}).
Let us fix such an $N$.

The rest of this appendix is a modification of \cite[Section 8.7]{Vovk/etal:2005book}.
First an informal argument.
Imagine that the data sequence $Z_1,\dots,Z_n$ is generated in two steps:
first a random multiset $\lbag Z_1,\dots,Z_n\rbag$ and then its random ordering.
Already the second step ensures that $(P_1,P'_1,\dots,P_N,P'_N)$ are distributed uniformly in $[0,1]^{2N}$
(even conditionally on $\lbag Z_1,\dots,Z_n\rbag$).
This can be demonstrated using the following backward argument.
Ignoring borderline effects, $P'_N$ is uniformly distributed in $[0,1]$ (at least approximately).
When $Y_N$ is disclosed, $P'_N$ will be settled.
Given what we already know, the distribution of $P_N$ will be uniform.
When $X_N$ is disclosed, $P_N$ will be settled.
Now the distribution of $P'_{N-1}$ given what we already know is uniform, etc.

For the formal proof, we will need the following $\sigma$-algebras.
Let $\GGG_n$, $n=0,\dots,N$, be the $\sigma$-algebra
\[
  \GGG_n
  :=
  \sigma
  \left(
    \lbag Z_1,\dots,Z_n\rbag,
    Z_{n+1},\tau_{n+1},\tau'_{n+1},\dots,Z_{N},\tau_{N},\tau'_{N}
  \right)
\]
generated by the multiset $\lbag Z_1,\dots,Z_n\rbag$
and the other random elements listed in the parentheses.
Let $\GGG'_n$, $n=1,\dots,N$, be the $\sigma$-algebra $\sigma(\GGG_n,Y_n,\tau'_n)$
generated by $\GGG_n$, the label $Y_n$ of the $n$th observation, and the random number $\tau'_n$.

The following two lemmas (analogues of \cite[Lemma 8.8]{Vovk/etal:2005book}) say that
 \begin{empheq}[box=\ovalbox]{align*}
                      &P'_N \hspace{5mm} P_N \hspace{7mm}     P'_{N-1} \hspace{4mm} \dots \hspace{4.5mm} P_2 \hspace{4.5mm} P'_1 \hspace{5mm} P_1 \\
    \GGG_N \subseteq{}&\GGG'_N \subseteq \GGG_{N-1} \subseteq \GGG'_{N-1} \subseteq \dots \subseteq      \GGG_1 \subseteq   \GGG'_1 \subseteq \GGG_0
\end{empheq}
is a stochastic sequence essentially in the usual sense of probability theory
\cite[Section~7.1.2]{Shiryaev:2019}:
in the second row we have a finite filtration,
and the random variables in the first row are measurable w.r.\ to the $\sigma$-algebras directly below them.

\begin{lemma}\label{lem:prime-measurable}
  For any trial $n=1,\dots,N$,
  $P'_n$ is $\GGG'_n$-measurable.
\end{lemma}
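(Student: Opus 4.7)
The plan is to show that $P'_n$ can be written as a measurable function of quantities $\GGG'_n$ already contains: the multiset $\lbag Z_1,\dots,Z_n\rbag$, the label $Y_n$, and the randomizer $\tau'_n$. By inspecting the formula \eqref{eq:p-prime} together with \eqref{eq:alpha}, $P'_n$ is a function of the full conformity-score sequence $(\alpha_1,\dots,\alpha_n)$ and $\tau'_n$; moreover, \eqref{eq:p-prime} only counts indices $i$ for which $\alpha_i<\alpha_n$ or $\alpha_i=\alpha_n$, so the dependence on the sequence of scores actually factors through the pair consisting of the multiset $\lbag \alpha_1,\dots,\alpha_n\rbag$ and the scalar $\alpha_n$. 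Hence it suffices to check $\GGG'_n$-measurability of these three ingredients.

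First I would argue that the multiset $\lbag \alpha_1,\dots,\alpha_n\rbag$ is $\GGG_n$-measurable, and a fortiori $\GGG'_n$-measurable. This is a direct consequence of equivariance \eqref{eq:equivariance}: for any two orderings of the same multiset of observations, applying $A$ produces the same multiset of scores, so $\lbag \alpha_1,\dots,\alpha_n\rbag$ is a deterministic function of $\lbag Z_1,\dots,Z_n\rbag$.

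Second I would use the additional label-invariance property \eqref{eq:label-invariance} that distinguishes a label conformity measure. For each label $y$ that appears in $\lbag Z_1,\dots,Z_n\rbag$, \eqref{eq:label-invariance} says that all indices $i$ with $y_i=y$ receive the same score; combining this with equivariance, there is a well-defined value $\beta(y,\lbag Z_1,\dots,Z_n\rbag)$ that equals $\alpha_i$ whenever $y_i=y$. Specializing $y:=Y_n$ (which is a label actually present, namely that of $Z_n$) gives $\alpha_n=\beta(Y_n,\lbag Z_1,\dots,Z_n\rbag)$, and this is $\GGG'_n$-measurable since both $Y_n$ and $\lbag Z_1,\dots,Z_n\rbag$ are.

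Finally, $\tau'_n$ is $\GGG'_n$-measurable by the very definition of $\GGG'_n$. Feeding all three pieces into \eqref{eq:p-prime} yields the $\GGG'_n$-measurability of $P'_n$. I do not expect any serious obstacle; the only mildly delicate point is the combined use of equivariance and label-invariance to pin down $\alpha_n$ itself (as opposed to just the multiset of scores) as a function of $Y_n$ and the multiset, and that is immediate once both properties are invoked.
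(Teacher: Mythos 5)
Your proof is correct and follows essentially the same route as the paper's: the multiset of conformity scores is $\GGG_n$-measurable by equivariance, the label-invariance requirement \eqref{eq:label-invariance} together with $Y_n$ pins down $\alpha_n$, and then \eqref{eq:p-prime} with $\tau'_n$ gives $\GGG'_n$-measurability of $P'_n$. You have merely spelled out in detail what the paper's one-sentence proof asserts.
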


\begin{proof}
  The random multiset of conformity scores of $Z_1,\dots,Z_n$ is $\GGG_n$-mea\-sur\-able,
  and so, according to the definition \eqref{eq:p-prime} and the invariance requirement \eqref{eq:label-invariance},
  $P'_n$ is $\GGG'_n$-measurable.
\end{proof}

\begin{lemma}\label{lem:measurable}
  For any trial $n=1,\dots,N$,
  $P_n$ is $\GGG_{n-1}$-measurable.
\end{lemma}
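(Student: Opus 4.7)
The plan is to unfold the definition \eqref{eq:p} of $P_n$ and verify that every quantity appearing in it is $\GGG_{n-1}$-measurable. By construction of $\GGG_{n-1}$, the following are immediately $\GGG_{n-1}$-measurable: the random number $\tau_n$, the full observation $Z_n=(X_n,Y_n)$, and the multiset $\lbag Z_1,\dots,Z_{n-1}\rbag$. Adjoining $Z_n$ to this multiset shows that $\lbag Z_1,\dots,Z_n\rbag$ is $\GGG_{n-1}$-measurable as well.

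The main input is the simplicity assumption: because $p$ is a \emph{simple} label-conditional conformal transducer, the underlying $A$ is an ordinary conformity measure, so full equivariance \eqref{eq:equivariance} holds and not merely its label-preserving restriction. I would use this in two ways. First, \eqref{eq:equivariance} implies that the multiset $\lbag(Z_i,\alpha_i):i=1,\dots,n\rbag$ is a measurable function of $\lbag Z_1,\dots,Z_n\rbag$ alone, since permuting the input sequence permutes the output the same way and hence leaves the multiset of pairs unchanged. Second, applying \eqref{eq:equivariance} to the transposition $(i,j)$ yields $Z_i=Z_j\Rightarrow\alpha_i=\alpha_j$; therefore the scalar $\alpha_n$ is a well-defined measurable function of $Z_n$ together with $\lbag Z_1,\dots,Z_n\rbag$, and is in particular $\GGG_{n-1}$-measurable.

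From here the conclusion is routine. Each of the counts
\[
  \left|\left\{i:Y_i=Y_n\wedge\alpha_i<\alpha_n\right\}\right|,
  \quad
  \left|\left\{i:Y_i=Y_n\wedge\alpha_i=\alpha_n\right\}\right|,
  \quad
  \left|\left\{i:Y_i=Y_n\right\}\right|
\]
is a measurable function of $Y_n$, $\alpha_n$, and the multiset $\lbag(Y_i,\alpha_i):i=1,\dots,n\rbag$, all of which we have shown to be $\GGG_{n-1}$-measurable. Combining them with $\tau_n$ according to \eqref{eq:p} gives $\GGG_{n-1}$-measurability of $P_n$.

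The one point I would flag as the crux of the argument, and the reason why the hypothesis of simplicity cannot be dropped here, is the second-to-last step above: under mere label-conditional equivariance the multiset $\lbag(Z_i,\alpha_i)\rbag$ would in general depend on the ordered label sequence $(Y_1,\dots,Y_n)$ and not only on the unordered multiset $\lbag Z_1,\dots,Z_n\rbag$, whereas $\GGG_{n-1}$ by design forgets the order of $Z_1,\dots,Z_{n-1}$. Full equivariance is precisely what closes this gap, and once it is invoked the rest is bookkeeping.
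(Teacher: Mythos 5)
Your proof is correct and follows the same route as the paper, whose own proof is just the one-line observation that the claim follows from definition \eqref{eq:p} together with the simplicity of $p$; you have merely spelled out the details (full equivariance makes the score multiset, and hence $\alpha_n$ and the counts in \eqref{eq:p}, a function of $\lbag Z_1,\dots,Z_{n-1}\rbag$, $Z_n$, and $\tau_n$, all of which generate part of $\GGG_{n-1}$). Your closing remark about why mere label-conditional equivariance would not suffice is exactly the point of the simplicity hypothesis.
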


\begin{proof}
  This follows from the definition \eqref{eq:p}
  and our requirement that the label-conditional conformal transducer $p$ should be simple.
\end{proof}

We will also need the following analogues of \cite[Lemma 8.7]{Vovk/etal:2005book}.
As in \cite{Vovk/etal:2005book}, $\Expect_{\FFF}$ stands for the conditional expectation
w.r.\ to a $\sigma$-algebra $\FFF$.

\begin{lemma}\label{lem:GGG-prime}
  For any trial $n=1,\dots,N$ and any $\epsilon\in[0,1]$,
  \[
    \Prob_{\GGG'_n}
    \left\{
      P_n \le \epsilon
    \right\}
    =\epsilon.
  \]
\end{lemma}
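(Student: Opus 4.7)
The plan is to prove the stronger statement that, conditional on $\GGG'_n$, the p-value $P_n$ is uniformly distributed on $[0,1]$; this immediately yields the desired equality. The proof proceeds in three steps: determine the conditional distribution of the conformity score $\alpha_n$ given $\GGG'_n$, use the independence of $\tau_n$ from $\GGG'_n$, and then read off the distribution of $P_n$ from its definition \eqref{eq:p}.

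First, I would use exchangeability of $Z_1,\dots,Z_N$ to analyse the conditional distribution of $Z_n$ given $\GGG'_n$. Exchangeability applied to permutations of $\{1,\dots,N\}$ that fix $\{n+1,\dots,N\}$ implies that, conditional on the multiset $\lbag Z_1,\dots,Z_n\rbag$ together with the future observations $(Z_{n+1},\dots,Z_N)$, the ordered tuple $(Z_1,\dots,Z_n)$ is uniformly distributed over all orderings of that multiset. Further conditioning on $Y_n$ forces position $n$ to be occupied by one of the entries of the multiset with label $Y_n$, chosen uniformly (with multiplicity). Since the randomizers $\tau_{n+1},\tau'_{n+1},\dots,\tau_N,\tau'_N$ and $\tau'_n$ are jointly independent of $(Z_1,\dots,Z_N)$, appending them to the conditioning changes nothing. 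So $Z_n$, conditional on $\GGG'_n$, is uniform over the sub-multiset $\lbag Z_i : 1\le i\le n,\, Y_i=Y_n\rbag$.

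Next, the equivariance property \eqref{eq:equivariance} of the conformity measure $A$ implies that the multiset $\lbag (Z_i,\alpha_i) : 1\le i\le n\rbag$ is a deterministic function of $\lbag Z_1,\dots,Z_n\rbag$, hence is $\GGG'_n$-measurable. Combining this with the previous paragraph, $\alpha_n$ is, conditional on $\GGG'_n$, uniformly distributed over the multiset $\lbag \alpha_i : 1\le i\le n,\, Y_i=Y_n\rbag$, which has size $k := |\{i\le n : Y_i=Y_n\}|$. Moreover, $\tau_n$ is independent of the family $(Z_1,\dots,Z_N,\tau_1,\dots,\tau_{n-1},\tau_{n+1},\dots,\tau_N,\tau'_1,\dots,\tau'_N)$ and therefore independent of $(\GGG'_n, \alpha_n)$.

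Finally, listing the distinct values of that sub-multiset in increasing order as $b_1<\dots<b_m$ with multiplicities $k_1,\dots,k_m$ (so $k_1+\dots+k_m=k$) and writing $K_j := k_1+\dots+k_j$, the formula \eqref{eq:p} gives, on the event $\{\alpha_n=b_j\}$,
\[
  P_n \;=\; \frac{K_{j-1}}{k} + \tau_n\,\frac{k_j}{k},
\]
which is uniform on $[K_{j-1}/k,\,K_j/k]$. Since this event has conditional probability $k_j/k$ and these subintervals partition $[0,1]$, it follows that $P_n$ is uniform on $[0,1]$ conditionally on $\GGG'_n$, which is exactly the desired identity. The only real obstacle is the exchangeability argument in the first step — one must verify carefully that conditioning on the multiset together with the disclosed future and the label $Y_n$ does indeed leave $Z_n$ uniform among the label-$Y_n$ entries; all remaining work is routine bookkeeping already internalised by the definitions of conformity measure and conformal p-value.
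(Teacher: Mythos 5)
Your argument is correct and is essentially the proof the paper delegates to (the proof of Lemma~8.7 in \cite{Vovk/etal:2005book}, sketched informally at the start of Appendix~A): condition on the multiset of the first $n$ observations, the disclosed future and $Y_n$, note that $Z_n$ is then uniform over the label-$Y_n$ entries so that $\alpha_n$ is uniform over the corresponding score sub-multiset (well defined by equivariance, which is available since $p$ is simple), and use the independent tie-breaking $\tau_n$ to get exact uniformity of $P_n$. No gaps beyond the routine measure-theoretic care you already flag.
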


\begin{proof}
  Follow the proof of \cite[Lemma 8.7]{Vovk/etal:2005book}.
\end{proof}

\begin{lemma}\label{lem:GGG}
  For any trial $n=1,\dots,N$ and any $\epsilon\in[0,1]$,
  \[
    \Prob_{\GGG_n}
    \left\{
      P'_n \le \epsilon
    \right\}
    =\epsilon.
  \]
\end{lemma}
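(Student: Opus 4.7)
The plan is to follow the proof of Lemma~\ref{lem:GGG-prime} (itself a modification of Lemma~8.7 in \cite{Vovk/etal:2005book}), adjusted for the coarser $\sigma$-algebra $\GGG_n$. First I would observe that conditioning on $\GGG_n$ fixes the multiset $\lbag Z_1,\dots,Z_n\rbag$ and all future random elements $Z_{n+1},\tau_{n+1},\tau'_{n+1},\dots,Z_N,\tau_N,\tau'_N$, leaving only the ordering of $(Z_1,\dots,Z_n)$ and the value of $\tau'_n$ unresolved. Exchangeability of $Z_1,Z_2,\dots$, together with the independence of the sequences $(\tau_i)$ and $(\tau'_i)$ from the observations, implies that conditional on $\GGG_n$ the tuple $(Z_1,\dots,Z_n)$ is a uniformly random permutation of its multiset, while $\tau'_n$ remains uniformly distributed on $[0,1]$ independently of this permutation.

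The second ingredient is the defining property \eqref{eq:label-invariance} of a label conformity measure: each score $\alpha_i$ is determined by $y_i$ together with the multiset $\lbag Z_1,\dots,Z_n\rbag$, so the multiset $\lbag\alpha_1,\dots,\alpha_n\rbag$ is $\GGG_n$-measurable. Let $\beta_1<\dots<\beta_k$ be its distinct values with multiplicities $m_1,\dots,m_k$ summing to $n$. By the uniform-permutation statement above, $\alpha_n$ equals $\beta_j$ with conditional probability $m_j/n$, and conditional on $\alpha_n=\beta_j$ the definition \eqref{eq:p-prime} yields
\[
  P'_n
  =
  \frac{\sum_{l<j} m_l + \tau'_n\, m_j}{n},
\]
which is uniform on the sub-interval $\bigl[\sum_{l<j}m_l/n,\;\sum_{l\le j}m_l/n\bigr]$ of length $m_j/n$. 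Summing over $j$ yields the uniform law on $[0,1]$ claimed by the lemma.

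The main obstacle, as in Lemma~\ref{lem:GGG-prime}, is making the uniform-permutation step rigorous in the presence of conditioning on the future random elements; this combines exchangeability of the observations with the independence of the randomization sequences from them, and while essentially routine it deserves the same explicit treatment as in the proof of Lemma~8.7 in \cite{Vovk/etal:2005book}. Apart from that, the argument is a short bookkeeping calculation built on the structure already set up for Lemma~\ref{lem:GGG-prime}.
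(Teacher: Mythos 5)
Your argument is correct and is essentially the paper's: the paper's proof of this lemma is simply a pointer to Lemma~8.7 of \cite{Vovk/etal:2005book}, whose content is exactly the multiset-conditioning/uniform-ordering argument you spell out (the multiset of scores is $\GGG_n$-measurable by equivariance and \eqref{eq:label-invariance}, the ordering is uniform given $\GGG_n$ by exchangeability, and the independent $\tau'_n$ smooths the ties so that \eqref{eq:p-prime} is exactly uniform). The only nitpick is that besides the ordering and $\tau'_n$, the variables $\tau_1,\dots,\tau_n$ and $\tau'_1,\dots,\tau'_{n-1}$ are also unresolved given $\GGG_n$, but they do not enter $P'_n$, so this does not affect the proof.
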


\begin{proof}
  Follow the proof of \cite[Lemma 8.7]{Vovk/etal:2005book}.
\end{proof}

Let us now prove the following double sequence of equalities:
\begin{multline}\label{eq:GGG-prime-full}
  \Prob_{\GGG'_n}
  \left\{
    P_n \le \epsilon_n,
    P'_{n-1} \le \epsilon'_{n-1},
    P_{n-1} \le \epsilon_{n-1},
    \dots,
    P'_1 \le \epsilon'_1,
    P_1 \le \epsilon_1
  \right\}\\
  =
  \epsilon_n
  \epsilon'_{n-1}
  \epsilon_{n-1}
  \dots
  \epsilon'_1
  \epsilon_1
\end{multline}
and
\begin{equation}\label{eq:GGG-full}
  \Prob_{\GGG_n}
  \left\{
    P'_n \le \epsilon'_n,
    P_n \le \epsilon_n,
    \dots,
    P'_1 \le \epsilon'_1,
    P_1 \le \epsilon_1
  \right\}
  =
  \epsilon'_n
  \epsilon_n
  \dots
  \epsilon'_1
  \epsilon_1.
\end{equation}
We will use induction arranging these equalities into a single sequence:
the equality for $\Prob_{\GGG'_1}$,
the equality for $\Prob_{\GGG_1}$,
the equality for $\Prob_{\GGG'_2}$,
the equality for $\Prob_{\GGG_2}$, etc.
The first of these equalities is a special case of Lemma~\ref{lem:GGG-prime}.
When proving any other of these equalities,
we will assume that all the previous equalities are true.

The equality for $\Prob_{\GGG_n}$, $n\in\{1,\dots,N\}$, follows from
\begin{multline*}
  \Prob_{\GGG_n}
  \left\{
    P'_n \le \epsilon'_n,
    P_n \le \epsilon_n,
    \dots,
    P'_1 \le \epsilon'_1,
    P_1 \le \epsilon_1
  \right\}\\
  =
  \Expect_{\GGG_n}
  \left(
    \Expect_{\GGG'_n}
    \left(
      1_{P'_n \le \epsilon'_n}
      1_{P_n \le \epsilon_n}
      \dots
      1_{P'_1 \le \epsilon'_1}
      1_{P_1 \le \epsilon_1}
    \right)
  \right)\\
  =
  \Expect_{\GGG_n}
  \left(
    1_{P'_n \le \epsilon'_n}
    \Expect_{\GGG'_n}
    \left(
      1_{P_n \le \epsilon_n}
      \dots
      1_{P'_1 \le \epsilon'_1}
      1_{P_1 \le \epsilon_1}
    \right)
  \right)\\
  =
  \Expect_{\GGG_n}
  \left(
    1_{P'_n \le \epsilon'_n}
    \epsilon_n
    \dots
    \epsilon'_1
    \epsilon_1
  \right)
  =
  \epsilon'_n
  \epsilon_n
  \dots
  \epsilon'_1
  \epsilon_1.
\end{multline*}
The first equality is just the tower property of conditional expectations.
The second equality follows from Lemma~\ref{lem:prime-measurable}.
The third equality follows from the inductive assumption, namely \eqref{eq:GGG-prime-full}.
The last equality follows from Lemma~\ref{lem:GGG}.

The equality for $\Prob_{\GGG'_n}$, $n\in\{2,\dots,N\}$, follows from
\begin{multline*}
  \Prob_{\GGG'_n}
  \left\{
    P_n \le \epsilon_n,
    P'_{n-1} \le \epsilon'_{n-1},
    P_{n-1} \le \epsilon_{n-1},
    \dots,
    P'_1 \le \epsilon'_1,
    P_1 \le \epsilon_1
  \right\}\\
  =
  \Expect_{\GGG'_n}
  \left(
    \Expect_{\GGG_{n-1}}
    \left(
      1_{P_n \le \epsilon_n}
      1_{P'_{n-1} \le \epsilon'_{n-1}}
      1_{P_{n-1} \le \epsilon_{n-1}}
      \dots
      1_{P'_1 \le \epsilon'_1}
      1_{P_1 \le \epsilon_1}
    \right)
  \right)\\
  =
  \Expect_{\GGG'_n}
  \left(
    1_{P_n \le \epsilon_n}
    \Expect_{\GGG_{n-1}}
    \left(
      1_{P'_{n-1} \le \epsilon'_{n-1}}
      1_{P_{n-1} \le \epsilon_{n-1}}
      \dots
      1_{P'_1 \le \epsilon'_1}
      1_{P_1 \le \epsilon_1}
    \right)
  \right)\\
  =
  \Expect_{\GGG'_n}
  \left(
    1_{P_n \le \epsilon_n}
    \epsilon'_{n-1}
    \epsilon_{n-1}
    \dots
    \epsilon'_1
    \epsilon_1
  \right)
  =
  \epsilon_n
  \epsilon'_{n-1}
  \epsilon_{n-1}
  \dots
  \epsilon'_1
  \epsilon_1.
\end{multline*}
Now the second equality follows from Lemma~\ref{lem:measurable}.
The third equality follows from the inductive assumption,
namely \eqref{eq:GGG-full} with $n-1$ in place of $n$.
The last equality follows from Lemma~\ref{lem:GGG-prime}.

Plugging $n:=N$ into \eqref{eq:GGG-full},
we obtain
\begin{equation*}
  \Prob
  \left\{
    P_1 \le \epsilon_1,
    P'_1 \le \epsilon'_1,
    \dots,
    P_N \le \epsilon_N,
    P'_N \le \epsilon'_N
  \right\}
  =
  \epsilon_1
  \epsilon'_1
  \dots
  \epsilon_N
  \epsilon'_N.
\end{equation*}
This implies the uniform distribution of $(P_1,P'_1,\dots,P_N,P'_N)$ in $[0,1]^{2N}$
(see, e.g., \cite[Lemma~2.2.3]{Shiryaev:2016}).

\section{Proof of Corollary~\protect\ref{cor:product}}
\label{app:proof-corollary}

Let the simple label-conditional conformal martingale be
\[
  S_n
  =
  F(P_1,\dots,P_n),
  \quad
  n=0,1,\dots,
\]
and the label conformal martingale be
\[
  S'_n
  =
  F'(P'_1,\dots,P'_n),
  \quad
  n=0,1,\dots,
\]
where $F$ and $F'$ are betting martingales and $P_1,P'_1,P_2,P'_2,\dots$
is a stream of p-values as in Theorem~\ref{thm:interleaved}.
Let us check that $S_n S'_n$, $n=0,1,\dots$,
is a martingale w.r.\ to the filtration generated by the p-values:
for any $n\in\{1,2,\dots\}$,
\begin{multline*}
  \Expect_{P_1,P'_1,\dots,P_{n-1},P'_{n-1}}
  (S_n S'_n)\\
  =
  \Expect_{P_1,P'_1,\dots,P_{n-1},P'_{n-1}}
  \left(
    \Expect_{P_1,P'_1,\dots,P_{n-1},P'_{n-1},P_n}
    (S_n S'_n)
  \right)\\
  =
  \Expect_{P_1,P'_1,\dots,P_{n-1},P'_{n-1}}
  \left(
    S_n
    \Expect_{P_1,P'_1,\dots,P_{n-1},P'_{n-1},P_n}
    (S'_n)
  \right)\\
  =
  \Expect_{P_1,P'_1,\dots,P_{n-1},P'_{n-1}}
  \left(
    S_n
    S'_{n-1}
  \right)\\
  =
  \Expect_{P_1,P'_1,\dots,P_{n-1},P'_{n-1}}
  \left(
    S_n
  \right)
  S'_{n-1}
  =
  S_{n-1}
  S'_{n-1},
\end{multline*}
where each lower index for $\Expect$ signifies the conditioning $\sigma$-algebra
(namely, the conditioning $\sigma$-algebra is generated by the listed random variables).
The third and last equalities follow from \eqref{eq:betting-martingale}.
\end{document}